\documentclass[conference]{IEEEtran}
\usepackage{cite}
\usepackage{amsmath,amssymb}
\usepackage{amsthm}
\usepackage{graphicx}
\usepackage{booktabs}
\usepackage[hidelinks]{hyperref}
\usepackage{microtype}

\newtheorem{definition}{Definition}
\newtheorem{lemma}{Lemma}
\newtheorem{proposition}{Proposition}

\newcommand{\reach}{\mathsf{reach}}
\newcommand{\isactive}{\mathsf{active}}
\newcommand{\revoked}{\mathsf{REVOKED}}
\newcommand{\valid}{\mathsf{valid}}
\newcommand{\intent}{T_{\mathsf{intent}}}
\newcommand{\overrev}{\mathsf{OverRev}}
\newcommand{\underrev}{\mathsf{UnderRev}}

\newcommand{\roots}{\mathsf{Roots}}
\newcommand{\method}{\textsc{VERA}}

\begin{document}

\title{\method: Authority-Preserving Edge Revocation for\\Federated AI-Agent Workflows}
\author{
\IEEEauthorblockN{Lifei Liu}
\IEEEauthorblockA{Independent Researcher\\Seattle, WA, USA\\lliu.lifei@gmail.com}
\and
\IEEEauthorblockN{Haoran Yu}
\IEEEauthorblockA{Independent Researcher\\Seattle, WA, USA\\haoranyu889@gmail.com}
\and
\IEEEauthorblockN{Xiaochong Jiang}
\IEEEauthorblockA{Independent Researcher\\Seattle, WA, USA\\jiang.xiaoc@northeastern.edu}
}
\maketitle

\begin{abstract}
Modern agent frameworks compose planners, tool agents, remote services, and
shared specialists into runtime delegation graphs, but their revocation APIs
still resemble token or subtree invalidation. When one delegation is withdrawn,
the runtime must know which agents lose authority while independently authorized
agents keep working. We study this \emph{authority consistency} problem and
introduce \method{} (Verifiable Edge Revocation for Agents), a verifier-checkable
revocation contract and API emitted by agent-runtime adapters as signed evidence.
Under disjunctive authority, revoking edge $e$ invalidates exactly
$\intent(e,G)=\reach(G)\setminus\reach(G\setminus\{e\})$, the agents whose every
authorizing root path used $e$. Used as a contract, this target exposes two
runtime failures: tree cascades over-revoke shared agents, while
deployer-scoped cascades under-revoke cross-domain descendants. In a LangGraph
framework-replay study, 25 workflow/input cells repeated 20 times yield 500
compiled-framework traces and 2,000 valid signed delegation decisions; 13/25
cells contain runtime multi-parent sharing and 8/25 contain cross-deployer
sharing. Signed replay verifies 500/500 target proofs, preserves all 320
alternate-parent shared-agent cases that tree cascade revokes, and rejects
unauthorized signers and omission attacks. Baseline replay over 1,960 edge
withdrawals shows that holder/node and tree-style targets cannot express this
behavior. We further validate schema portability on A2A, AutoGen, and CrewAI
artifacts: nine traces, including five executable CrewAI workflows, yield
53 signed delegation events that pass schema and signature checks.
\end{abstract}

\begin{IEEEkeywords}
AI agents, multi-agent systems, trustworthy AI, delegation, authorization,
revocation, agent workflows
\end{IEEEkeywords}

\section{Introduction}

AI-agent workflows are becoming multi-principal systems. Frameworks such as
AutoGen~\cite{wu2023autogen}, CrewAI~\cite{crewai2024}, and
LangGraph~\cite{langgraph2026} compose planners, tool agents, remote
specialists, and shared service agents; A2A~\cite{google2026a2a} and
MCP~\cite{mcp2025auth} make those handoffs cross process and organizational
boundaries. Recent agent-identity work argues that these tools need
authenticated, authorized, and auditable delegated
authority~\cite{south2025delegation,south2025identity,prakash2026aip}.
This paper asks the next workflow question: what revocation contract should an
agent runtime or middleware expose when one delegation relationship is withdrawn?

The ambiguity is semantic. Consider a node $C$ with two delegations $A\to C$ and
$B\to C$: if $A$ revokes only its edge, a disjunctive policy keeps $C$ authorized
through $B\to C$, so a tree cascade from $C$ over-revokes $C$ and its descendants.
Conversely, if $A\to C\to D$ crosses into another deployer domain, a cascade
restricted to $A$'s registry may revoke $C$ but miss $D$ (under-revocation). We
therefore argue that revocation must specify its semantic target before choosing a
propagation mechanism. Under disjunctive authority---a node is authorized iff some
valid root-to-node path survives---revoking edge $e$ should invalidate exactly the
nodes that lose all root paths when $e$ is removed,
\[
  \intent(e,G) = \reach(G) \setminus \reach(G \setminus \{e\}),
\]
the nodes edge-dominated by $e$. Our contribution is not to introduce a new
dominator algorithm; it is to turn this correctness oracle into an executable
agent-workflow contract: the runtime emits edge identifiers, signed delegation
events, revocation envelopes, and enough evidence for a relying party to check
the target without re-querying a central authorization service. We then measure
when common target languages are unsafe or disruptive.
Figure~\ref{fig:witnesses} shows the failure on concrete multi-agent handoff
graphs: a shared specialist should survive if another workflow still delegates to
it, while a remote descendant should be revoked if its only authority runs
through the withdrawn edge.

\paragraph{Contributions.}
\begin{itemize}
\item \textbf{\method{} contract and API.} A scoped signed-DAG model and
reference schema for federated AI-agent delegation, where the revocation API
names delegation edges and computes the authority-preserving target
$\reach(G)\setminus\reach(G\setminus\{e\})$.
\item \textbf{Framework-grounded evidence.} Executed LangGraph traces are
normalized into signed delegation decisions and replayed against exact,
holder/node, tree, deployer-scoped, and full-graph revalidation targets. A2A,
AutoGen, and CrewAI artifacts exercise the same schema beyond LangGraph.
\item \textbf{Verifier-checkable artifact.} Ed25519-signed edges, bilateral
in-edge commitments, a history-binding revocation log, the \method{} Python API,
and a CLI verifier check the batch target
$\reach(G)\setminus\reach(G\setminus R)$ from signed survivor/cut evidence
without storing the full graph.
\end{itemize}
Propagation/delivery is an orthogonal layer and out of scope here; we fix the
target semantics it must carry.

\begin{figure*}[t]
\centering
\includegraphics[width=0.80\textwidth,trim=0 145 0 0,clip]{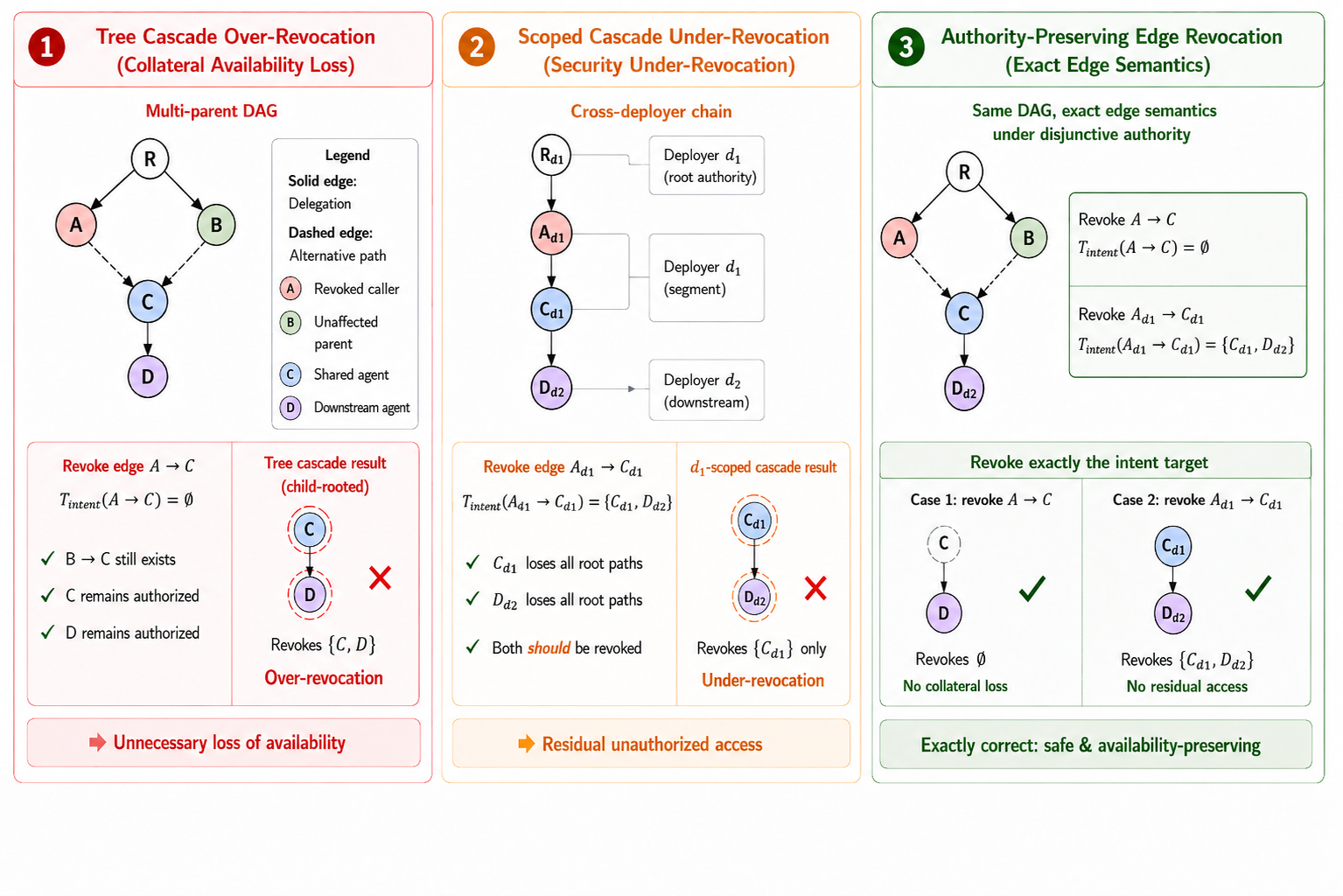}
\caption{Motivation and semantic insight. Tree cascades can over-revoke shared
agents, deployer-scoped cascades can under-revoke cross-domain descendants, and
edge revocation preserves exactly the agents that retain an alternate
authorizing path.}
\label{fig:witnesses}
\end{figure*}

\paragraph{Scope.}
We do not build Byzantine propagation, transparency logs, or a production
platform. \method{} is a revocation adapter/middleware contract for existing
orchestrators that emit signed events. Disjunctive authority is one common
policy; other authority policies require different target functions.

\section{Background and Motivation}

\subsection{From Chains to Multi-Parent DAGs}

Agent-identity and delegated-authorization proposals often emit chains: a user
delegates to an agent, that agent invokes a tool or another agent, and a verifier
checks signed or token-bound authority statements. Agent runtimes also create
multi-parent patterns: a task agent may be authorized by scheduling and billing
agents, a shared specialist may be admitted by multiple orchestrators, and a
cross-deployer task may pass from one organization to another. These are
orchestration-layer relationships, so we model them as a directed acyclic graph.
The multi-parent case, $|\mathsf{parents}(v)| > 1$, is where revocation semantics
become nontrivial.

\subsection{Why Existing Revocation Mechanisms Are Not Enough}

Classical PKI mechanisms (CRLs~\cite{RFC5280}, OCSP~\cite{RFC6960},
CRLite~\cite{crlite2017}) revoke credential bindings, not individual delegation
relationships. Macaroons~\cite{birgisson2014macaroons} and
SPIFFE~\cite{spiffe2018} rely on attenuation, re-contact, or short lifetimes;
DW-RBAC~\cite{wainer2007delegation} assumes a centralized single-domain engine.
Agent delegation instead needs signed evidence that lets relying parties verify
authority and revocation without a central registry at decision time.
Table~\ref{tab:mechanisms} summarizes the gap: adjacent mechanisms authenticate
credentials, endpoints, or delegated identity, but none defines the
authority-preserving target for revoking one edge in a multi-parent,
cross-deployer agent DAG.

\begin{table}[t]
\centering
\caption{Adjacent mechanisms authenticate credentials, agent identity, or
communication endpoints, but do not define the exact authority loss caused by one
revoked edge in a multi-parent agent DAG.}
\label{tab:mechanisms}
\small
\resizebox{\columnwidth}{!}{%
\begin{tabular}{llcc}
\toprule
Mechanism & Main target & Multi- & Exact edge \\
          &             & parent & target \\
\midrule
CRL / OCSP~\cite{RFC5280,RFC6960} & certificate & no & no \\
OAuth / MCP~\cite{RFC7009,mcp2025auth} & token/scope & no & no \\
Agent IAM~\cite{south2025delegation,south2025identity} & agent identity & partial & no \\
AIP~\cite{prakash2026aip} & delegation token & chain & no \\
A2A~\cite{google2026a2a} & agent endpoint & yes & no \\
APS~\cite{pidlisnyi2026aps} & subtree (tree) & no & no \\
\textbf{This paper} & \textbf{delegation edge} & \textbf{yes} & \textbf{yes} \\
\bottomrule
\end{tabular}
}
\end{table}

\subsection{Two Failure Modes}

Figure~\ref{fig:witnesses} illustrates the two core failures.
\emph{Over-revocation:} in root$\to A,B$; $A,B\to C$; $C\to D$, revoking $A\to C$
leaves $C$ authorized through $B\to C$, yet a tree cascade from $C$ revokes
$\{C,D\}$, causing collateral disruption. \emph{Under-revocation:} in a
cross-deployer chain root$(d_1)\to A(d_1)\to C(d_1)\to D(d_2)$, revoking $A\to C$
should remove both $C$ and $D$, but a $d_1$-scoped cascade sees $C$ and misses the
cross-domain descendant $D$.

\section{Model and Semantics}

\subsection{SIGNED-DAG Delegation Model}

\begin{definition}[SIGNED-DAG]
A SIGNED-DAG is a tuple $G=(V,E,\sigma)$ where $V$ is a finite set of agent
nodes, $E \subseteq V \times V$ is a set of directed delegation edges, and
$\sigma$ maps each edge to a deployer signature over the edge descriptor. Each
node $v$ has a deployer label $\mathsf{dep}(v)$ and public key
$\mathsf{pk}(v)$. An edge $e=(p,c)$ means that parent $p$ delegates authority
to child $c$. The graph is acyclic.
\end{definition}

The DAG is an \emph{execution-evidence} DAG. Retries, recurring workflows, and
cyclic orchestrator code are represented as successive invocation events or
epochs, so each signed replay view remains acyclic.

Roots are trust anchors known to verifiers out of band. We write
$\roots(G)$ for the root set and $\reach(G)$ for the nodes reachable from any
root by following directed edges in $G$.

The signatures in $\sigma$ authenticate the graph. They are not what makes the
edge-dependence result true; the result is graph reachability. Signatures make
the result enforceable without contacting a central authority, because a
relying party can check the authenticity of edges and revocation events locally.

\subsection{Revocation Log}

Let $\mathcal{R}$ be a signed revocation log containing pairs $(e,t_e)$, where
$e$ is an edge identifier and $t_e$ is the revocation time. The edge predicate
is:
\[
  \revoked(e,t) \Leftrightarrow
  \exists (e',t_e) \in \mathcal{R}: e'=e \wedge t_e \leq t.
\]
Let $\mathcal{R}^t=\{e:\revoked(e,t)\}$ be the set of edges revoked by time
$t$.

\begin{definition}[Path validity]
A root-to-node path $\pi=\langle e_0,\ldots,e_k\rangle$ is valid at time $t$
iff every edge is authenticated and unrevoked:
\[
  \valid(\pi,\mathcal{R},t) \Leftrightarrow
  \forall i \in [0,k].\; \neg \revoked(e_i,t) \wedge \mathsf{sig\_ok}(e_i).
\]
\end{definition}

\subsection{Disjunctive Authority}

\begin{definition}[Disjunctive authority]
A node $v$ is active at time $t$ iff at least one valid root-to-$v$ path
exists:
\[
  \isactive(v,G,\mathcal{R}^t) \Leftrightarrow
  v \in \reach(G \setminus \mathcal{R}^t).
\]
\end{definition}

This policy says that independent parent delegations are alternatives: a node
retains authority if one valid parent path survives. The policy matches
settings where multiple orchestrators can independently authorize a shared
agent. It does not cover threshold approval or all-parent approval; those
policies require different target functions.

\subsection{Threat Model and Trust Assumptions}
\label{sec:threat}

The contract is verifier-independent only relative to an authenticated prefix of
the signed delegation graph and revocation log. A relying party pins deployer
public keys out of band, verifies each edge and revocation envelope locally, and
then checks the target without querying a central authorization service. The
same paragraph-level assumptions are summarized in Table~\ref{tab:threat}.

\begin{table}[t]
\centering
\caption{Threat assumptions and contract boundary.}
\label{tab:threat}
\resizebox{\columnwidth}{!}{%
\begin{tabular}{@{}lll@{}}
\toprule
Condition & Treatment & Boundary \\
\midrule
Forged delegation or revocation & reject by pinned deployer key & covered \\
Unauthorized revoker & parent domain must sign envelope & covered \\
Omitted in-edge to target node & signed in-edge commitment must match & covered \\
Omitted reachable node & federation-signed roster must match ($T\cup W=B$) & covered \\
Log rollback or substitution & hash-bound monotone checkpoint detects mismatch & covered after observation \\
Delayed or suppressed update & propagation budgets $\delta_{\mathsf{prop}}$ & out of scope \\
Byzantine deployer or privacy leak & not solved; disclosure is measured & out of scope \\
\bottomrule
\end{tabular}}
\end{table}

Thus Lemma~\ref{lem:edgedom} is a correctness statement over the
\emph{observed, authenticated} graph: $\reach(G)\setminus\reach(G\setminus R)$
is exact for the graph and revocation set the verifier has seen. The prototype
(Section~\ref{sec:prototype}) demonstrates signature checks, rollback detection,
and authenticated adjacency completeness. It does not solve propagation
freshness, transparency-log anchoring, Byzantine deployers who sign conflicting
evidence, or confidentiality beyond the disclosure measurements.

\section{Revocation Targets}

\subsection{Design Requirements}
\label{sec:requirements}

Before defining the target we state the requirements a revocation mechanism for
complex multi-parent agent systems must satisfy; the rest of the paper is
organized around discharging them.

\noindent\textbf{R1 (Edge-level target).} Revocation must name the withdrawn
delegation edge, not a whole credential, principal, or subtree.
\noindent\textbf{R2 (Multi-parent preservation).} A node with an alternate
authorizing root path must stay active.
\noindent\textbf{R3 (Cross-domain completeness).} A node that loses all
authorizing paths must be revoked even when the loss crosses deployer boundaries.
\noindent\textbf{R4 (Verifier independence).} A relying party must check the
target from signed evidence it already holds.
\noindent\textbf{R5 (Batch revocation).} Simultaneous revocations must yield the
exact joint target, not necessarily the union of single-edge targets.

The disjunctive-authority target defined next is the policy that meets R1--R3;
the verifier-independent prototype (Section~\ref{sec:prototype}) discharges R4;
and batch non-compositionality (Section~\ref{sec:batch}) addresses R5.

\subsection{Semantic Target for Edge Revocation}

\begin{definition}[Edge-removal intent]
For a single edge $e \in E$, the intended target set under disjunctive
authority is:
\[
  \intent(e,G) = \reach(G) \setminus \reach(G \setminus \{e\}).
\]
For an edge set $R\subseteq E$, we use
$\intent(R,G)=\reach(G)\setminus\reach(G\setminus R)$.
\end{definition}

This is the set of nodes that had authority before $e$ was removed and do not
have authority after $e$ is removed. It is not the subtree below the child of
$e$, because descendants may retain authority through alternate paths.

\begin{lemma}[Edge-dependence characterization]
\label{lem:edgedom}
For every reachable node $v$ and edge $e$,
\[
  v \in \intent(e,G) \iff
  \text{every root-to-}v\text{ path contains }e.
\]
\end{lemma}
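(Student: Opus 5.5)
We prove both directions by unfolding the definition of $\intent(e,G)$ and reasoning directly about root-to-$v$ paths. The only fact we need about $\reach$ is the one fixed in the model: $u \in \reach(H)$ iff some path in $H$ runs from some element of $\roots(G)$ to $u$. Here $H$ ranges over $G$ and $G\setminus\{e\}$, and the root set is the same in both, since removing an edge does not change which nodes are trust anchors. Fix a node $v \in \reach(G)$ and an edge $e \in E$. Since $v$ is reachable, $v \in \intent(e,G)$ is equivalent to $v \notin \reach(G\setminus\{e\})$, so it suffices to show
\[
  v \notin \reach(G\setminus\{e\}) \iff \text{every root-to-}v\text{ path in }G\text{ contains }e.
\]

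($\Leftarrow$) Argue by contraposition. Suppose $v \in \reach(G\setminus\{e\})$. Then there is a root-to-$v$ path $\pi$ whose edges all lie in $E\setminus\{e\}$. Since $E\setminus\{e\}\subseteq E$, $\pi$ is also a root-to-$v$ path in $G$, and it does not contain $e$. So not every root-to-$v$ path in $G$ contains $e$.

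($\Rightarrow$) Again argue by contraposition. Suppose some root-to-$v$ path $\pi$ in $G$ avoids $e$. Every edge of $\pi$ then lies in $E\setminus\{e\}$, so $\pi$ is a root-to-$v$ path in $G\setminus\{e\}$. Hence $v\in\reach(G\setminus\{e\})$, and $v \notin \intent(e,G)$.

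There is no real obstacle; the point needing care is bookkeeping. First, $G\setminus\{e\}$ deletes only the edge and keeps both endpoints and the root set, so $\reach$ on both sides is taken with respect to the same anchors. Second, ``root-to-$v$ path'' means a path in $G$ from any root, matching the disjunctive multi-root definition of $\reach$. A degenerate case is worth a remark: if $v$ is itself a root, the empty path reaches it, so no root-to-$v$ path contains $e$, and $v$ remains reachable in $G\setminus\{e\}$. Both sides of the equivalence are therefore false, consistent with the lemma. Acyclicity is not used. The same argument, with $\{e\}$ replaced by a set $R$, gives the batch version: $v\in\intent(R,G)$ iff every root-to-$v$ path meets $R$.
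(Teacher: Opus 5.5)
Your proof is correct and follows the paper's argument: for reachable $v$, both reduce to the fact that a root-to-$v$ path avoiding $e$ exists in $G$ iff it exists in $G\setminus\{e\}$; the paper states each direction directly, and you argue each by contraposition. One small wording slip is in your root remark: the empty path shows only that \emph{not every} root-to-$v$ path contains $e$, since a nonempty path from another root could still use $e$, but that is all your conclusion needs.
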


\begin{proof}
If $v \in \intent(e,G)$, then $v$ is reachable in $G$ but unreachable after
removing $e$. Therefore no root-to-$v$ path avoiding $e$ exists, so every
root-to-$v$ path contains $e$. Conversely, if every root-to-$v$ path contains
$e$, then removing $e$ destroys all root-to-$v$ paths. Since $v$ was reachable
before removal and is unreachable after removal, $v \in \intent(e,G)$.
\end{proof}

The target can be computed in $O(|V|+|E|)$ time by two forward reachability
traversals: one on $G$ and one on $G \setminus \{e\}$. For many queries,
standard dominator data structures can precompute equivalent edge-dominance
information. We use this established machinery as the correctness oracle for the
\emph{target language} that signed agent protocols must carry. Edge identifiers,
signed revocation envelopes, in-edge completeness commitments, and proof-carrying
target sets let a verifier check this target without an online central
authorization service.

The contract is meaningful only if a relying party can realize it locally. The
following proposition states the correctness guarantee that the threat model
(Section~\ref{sec:threat}) and prototype (Section~\ref{sec:prototype}) are built
to provide; it formalizes requirement~R4.

\begin{proposition}[Local observed-prefix exactness]
\label{prop:localsafety}
Let a relying party hold an authenticated view $G_v$ together with a
federation-signed snapshot manifest committing to the roster
$B=\reach(G)$ of all root-reachable nodes for the relevant permission. Let $R$
be the revocations observed as an authenticated log prefix. Suppose the verifier
checks sets $T$ and $W$ such that (i)~all used edges and revocations verify under
pinned keys; (ii)~each $w\in W$ has a revocation-free signed root path in $G_v$;
(iii)~each $t\in T$ is non-root and carries a signed complete accepted-in-edge
commitment, with every unrevoked committed in-edge originating in $T$; and
(iv)~$T\cap W=\emptyset$ and $T\cup W=B$. Then
\[
  T=\intent(R,G)
\]
for that observed prefix---i.e.\ the locally checked target equals the intended
target, with no online query to any authority.
\end{proposition}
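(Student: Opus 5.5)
The argument splits into two inclusions using the partition in condition~(iv). Since $T\cup W=B=\reach(G)$ and $T\cap W=\emptyset$, it suffices to show (a) $W\cap\intent(R,G)=\emptyset$, i.e.\ every $w\in W$ lies in $\reach(G\setminus R)$, and (b) $T\cap\reach(G\setminus R)=\emptyset$. Given (a) and (b), $T\subseteq B\setminus\reach(G\setminus R)=\intent(R,G)$. Conversely, any $v\in\intent(R,G)$ lies in $B$ and is not in $W$ by (a), so $v\in T$. This gives $T=\intent(R,G)$.

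Step (a) is direct. By (ii) and (i), each $w\in W$ has a root path in $G_v$ whose edges verify under pinned keys and none of which appears in the observed revocation prefix $R$. Since authenticated edges of $G_v$ are edges of $G$ (the observed-prefix reading fixed in Section~\ref{sec:threat}), this path survives in $G\setminus R$, so $w\in\reach(G\setminus R)$.

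Step (b) is the main obstacle, because it must rule out an alternate surviving path that the verifier never saw. I would argue by contradiction. Suppose some $t\in T$ lies in $\reach(G\setminus R)$. Take a root-to-$T$ path in $G\setminus R$ and let $t^\ast$ be the first vertex on it that belongs to $T$. By (iii), $t^\ast$ is not a root, so the path enters $t^\ast$ through an edge $(p,t^\ast)\notin R$. Completeness of the signed accepted-in-edge commitment for $t^\ast$, verified under pinned keys by (i), ensures that $(p,t^\ast)$ is a committed in-edge. The hypothesis that omitted in-edges are caught by the commitment is exactly what this step needs. Because the edge is unrevoked, (iii) gives $p\in T$. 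But $p$ precedes $t^\ast$ on the path, contradicting the choice of $t^\ast$ as the first vertex in $T$. Equivalently, one can induct along a topological order of the acyclic $G$. The subtle point is that ``complete'' must mean the commitment lists every in-edge of $t^\ast$ in $G$, not merely those in $G_v$. I would state this explicitly as the content of the signed commitment, relative to the observed prefix.

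Finally, I would note that every object used is local: signatures, the roster $B$, the paths for $W$, and the in-edge commitments for $T$. So the equality holds without any online query, and it is exact only relative to the observed prefix $R$, as the statement says.
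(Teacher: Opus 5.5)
Your proposal is correct and follows essentially the paper's argument. Survivor paths give $W\subseteq\reach(G\setminus R)$, the first-entry argument with (iii) gives $T\cap\reach(G\setminus R)=\emptyset$, and the partition $T\cup W=B=\reach(G)$ finishes; the paper phrases that last step as $W=\reach(G\setminus R)$ and hence $T=B\setminus W$, which is the same set manipulation. Your remarks that authenticated edges of $G_v$ must be edges of $G$, and that the commitment must be complete for in-edges in $G$, make explicit what the paper leaves to its trust assumption. Your first-entry argument also shows that acyclicity is not actually needed.
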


\begin{proof}
By (i)--(ii), every $w\in W$ has an authenticated revocation-free root path, so
$W\subseteq\reach(G\setminus R)$. No $t\in T$ can have such a path: since $t$ is
not a root and $G$ is acyclic, any root path to $t$ has a first edge entering
$T$ from outside $T$, but (iii) says every unrevoked committed in-edge to a
target node originates inside $T$. Thus
$T\cap\reach(G\setminus R)=\emptyset$. For any
$u\in\reach(G\setminus R)$, the manifest gives $u\in B=T\cup W$; since
$u\notin T$, $u\in W$. Hence $\reach(G\setminus R)\subseteq W$, so
$W=\reach(G\setminus R)$ and
$T=B\setminus W=\reach(G)\setminus\reach(G\setminus R)=\intent(R,G)$.
Exactness is relative to the observed prefix $R$;
revocations not yet observed are the concern of the propagation layer and are
bounded by that layer's stale-authority budget.
\end{proof}

\subsection{Approximation Mechanisms}

We compare mechanisms by the set of nodes they revoke for an edge-removal
request.

\paragraph{Edge target.}
The exact edge target computes $\intent(e,G)$ and is the reference target for all
baseline comparisons.

\paragraph{Tree cascade.}
Tree cascade revokes the child of $e$ and all descendants reachable by forward
traversal. It is exact on trees, but over-revokes on DAGs whenever a descendant
has a surviving alternate root path. This is not a synthetic strawman: it is the
natural generalization of subtree revocation in tree-shaped delegation models,
and the agent passport proposal (APS)~\cite{pidlisnyi2026aps} explicitly models
delegation as a rooted tree with a single parent identifier, for which subtree
cascade is the intended mechanism.

\paragraph{Deployer-scoped cascade.}
Deployer-scoped cascade performs a forward traversal only through nodes visible
to the initiating deployer. It can miss cross-domain descendants whose
authority depends on the revoked edge. This models the common deployment reality
that revocation is administered per administrative domain: deleting a
Kubernetes \texttt{RoleBinding} affects only its namespace~\cite{k8srbac2024},
OAuth token revocation is performed at the issuing authorization
server~\cite{RFC7009}, and SPIFFE/SPIRE trust domains issue and rotate
credentials within a single domain~\cite{spiffe2018}. None of these provides a
cross-domain edge-revocation target.

\paragraph{Node or deployer revocation.}
Node revocation invalidates all authority held by one node, regardless of which
parent edge granted it---the model behind credential-centric revocation such as
CRLs~\cite{RFC5280} and OCSP~\cite{RFC6960}. Deployer revocation invalidates an
entire deployer domain. These mechanisms are appropriate for different operator
intents, such as key compromise, but they are coarse approximations for removing
one delegation relationship.

\paragraph{Segment revocation.}
Segment revocation targets content-addressed output segments, not delegation
authority. It is a useful primitive for output binding, but it is not evaluated
as a substitute for edge revocation.

\subsection{Error Metrics}

\begin{definition}[Over-revocation and under-revocation]
For mechanism $M$ and semantic target $S$:
\begin{align*}
  \overrev(M,G,S)&=|M(G)\setminus S|,\\
  \underrev(M,G,S)&=|S\setminus M(G)|.
\end{align*}
\end{definition}

Over-revocation is an availability and correctness failure: legitimate agents
lose authority. Under-revocation is a security failure: agents that should lose
authority remain active.

\subsection{Batch API Warning: Revocation Does Not Compose by Union}
\label{sec:batch}

Revocation APIs often process events independently. In multi-parent DAGs, that
implementation shortcut is not generally correct.

\begin{proposition}[Batch target for APIs]
For a set of revoked edges $R \subseteq E$:
\[
  \intent(R,G)=\reach(G)\setminus\reach(G\setminus R).
\]
In general,
\[
  \intent(R,G) \neq \bigcup_{e\in R}\intent(e,G).
\]
\end{proposition}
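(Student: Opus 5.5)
The first claim is immediate from the definition of edge-removal intent: for an edge set $R$ the paper defines $\intent(R,G)=\reach(G)\setminus\reach(G\setminus R)$. To make the statement carry content, I would also give the path-level reading that parallels Lemma~\ref{lem:edgedom}. For reachable $v$, $v\in\intent(R,G)$ holds iff every root-to-$v$ path contains at least one edge of $R$. The argument is the lemma's, with ``contains $e$'' replaced by ``meets $R$''. If some root-to-$v$ path avoids $R$, it survives in $G\setminus R$, so $v\in\reach(G\setminus R)$. Conversely, any path in $G\setminus R$ is a path of $G$ that avoids $R$.

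For the inequality, a general identity is false, so I would give an explicit witness graph. Take roots $\{r\}$, edges $e_1=(r,A)$, $e_2=(r,B)$, $e_3=(A,C)$, $e_4=(B,C)$, and $R=\{e_3,e_4\}$. Then $C$ has two parents, and each single removal leaves the other path, so $\intent(e_3,G)=\intent(e_4,G)=\emptyset$ by Lemma~\ref{lem:edgedom}. Removing both edges cuts $C$ off, so $\intent(R,G)=\{C\}$. Hence the union $\emptyset$ differs from the joint target $\{C\}$. This is exactly the over-revocation witness from Figure~\ref{fig:witnesses}, read in reverse.

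To show the failure is one-directional, I would add the inclusion $\bigcup_{e\in R}\intent(e,G)\subseteq\intent(R,G)$. It follows from monotonicity of reachability under edge deletion: $\reach(G\setminus R)\subseteq\reach(G\setminus\{e\})$ for every $e\in R$. So processing revocations independently can only under-revoke, which is the security-relevant direction.

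The only real obstacle is choosing a witness small enough to be checked by hand, and the diamond above suffices. The remaining steps are definitional.
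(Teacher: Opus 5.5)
Your proposal is correct and follows the paper's proof. The first identity is definitional, and your separating witness is the same diamond (root to $A$ and $B$, both to $C$, revoking both in-edges of $C$), where each single-edge target is empty but the joint target is $\{C\}$. Your additional inclusion $\bigcup_{e\in R}\intent(e,G)\subseteq\intent(R,G)$, obtained from monotonicity of reachability under edge deletion, is also correct: it sharpens the paper's informal remark that unioning can under-revoke by showing that it never over-revokes.
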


\begin{proof}
Let root delegate independently to $A$ and $B$, and let both $A$ and $B$
delegate to $C$. Let $R=\{A\to C,\;B\to C\}$. Removing either edge alone does
not disconnect $C$, so both single-edge targets are empty. Removing both edges
disconnects $C$, so $\intent(R,G)$ contains $C$. Therefore the batch target is
not the union of single-edge targets.
\end{proof}

This is an operational warning for agent revocation APIs. Even a system that
computes exact single-edge targets can under-revoke if it processes simultaneous
revocations by unioning independently computed single-edge targets.

\section{Measurement Methodology}

Figure~\ref{fig:pipeline} summarizes the \method{} measurement pipeline:
framework events are normalized into signed delegation DAGs and replayed under
exact and baseline target languages.

\begin{figure*}[t]
\centering
\includegraphics[width=0.88\textwidth]{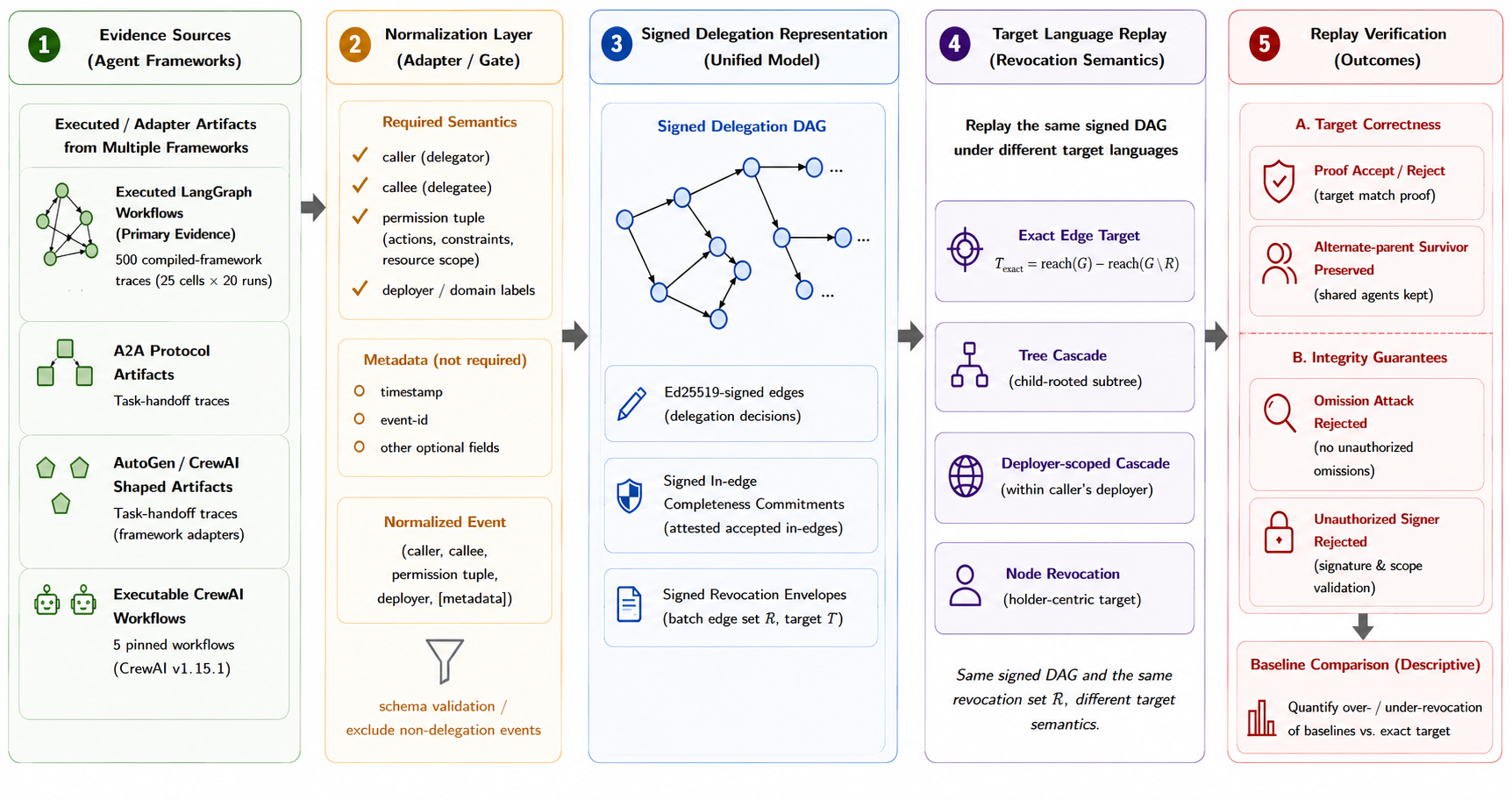}
\caption{\method{} replay pipeline. Framework evidence is normalized into a
signed DAG, replayed under exact and baseline target languages, and checked for
target correctness and integrity; E16 is primary and E17--E19 test portability.}
\label{fig:pipeline}
\end{figure*}

\subsection{Graph Families}

We combine deterministic witnesses, random and scale-free DAGs, batch-revocation
tests, executed LangGraph traces (E16), A2A/AutoGen/CrewAI portability artifacts
(E17--E19), and a static LangGraph corpus. E16 is the primary agent-native
evidence; mined and non-agent DAGs are supporting topology evidence rather than
direct revocation traffic.

\section{Results}

\subsection{RQ1: Witness Failures}

On the shared-subDAG witness root $\to A,B$; $A,B\to C$; $C\to D$, the
semantic target for revoking $A\to C$ is empty because $C$ and $D$ remain
reachable through $B\to C$. Tree cascade revokes $\{C,D\}$, producing
$\overrev=2$.

On the cross-deployer witness root$(d_1)\to A(d_1)\to C(d_1)\to D(d_2)$, the
semantic target for revoking $A\to C$ is $\{C,D\}$. A deployer-$d_1$-scoped
cascade revokes only $\{C\}$, producing $\underrev=1$.

These witnesses are small by design. Their role is not to establish frequency;
it is to show that the two failure modes are deterministic consequences of
topology and visibility assumptions.

\subsection{RQ2: Scale-Graph Stress Tests}

Generated DAGs are stress tests, not prevalence estimates. In
Barabasi--Albert~\cite{barabasi1999emergence}
DAGs with $n\in\{100,500,2000\}$ and attachment parameter $m\in\{2,4\}$, tree
cascade over-revokes on every sampled revocation event; mean collateral ranges
from $5.9$ nodes ($n=100,m=2$) to $79.7$ nodes ($n=2000,m=4$). Erdos--Renyi
checks with matched average degree show the same qualitative behavior. To rule
out cherry-picked edges, we also sample 200 random multi-parent DAGs under five
edge-selection policies (uniform, into-multi-parent, root-adjacent,
deepest-parent, largest-cascade); tree cascade over-revokes on $63.5$--$100\%$
of selected edges.

\subsection{RQ3: Batch Revocation}

The RBAC-inspired case study contains a backend service account with two parent
bindings. Revoking either parent edge alone leaves an alternate path, so both
single-edge targets are empty. Revoking both parent edges simultaneously
disconnects the backend node and its workers, producing a three-node batch
target. A system that unions independently computed single-edge targets would
revoke nothing.

We repeated this test systematically. In 200 random multi-parent DAGs with 80
nodes and shared-parent ratio 0.45, we sampled up to 200 candidate revocation
sets per graph from 2--4 incoming edges of multi-parent nodes. Across 39,991
sampled edge sets, the union of exact single-edge targets under-revoked the true
batch target in a mean 21.25\% of cases per graph (bootstrap 95\% CI
$[20.86,21.65]$; mean missed nodes 0.24; maximum 5). The gap is not universal:
many sampled batches are already captured by one constituent edge target. The
important point is that a nonzero, reproducible under-revocation gap remains
even when every single-edge target is individually exact.

\begin{table}[t]
\centering
\caption{Batch revocation cannot be implemented as the union of exact
single-edge targets. Random DAGs sample 2--4-edge incoming sets; the RBAC-inspired
rows are concrete multi-parent bindings from the case study.}
\label{tab:batch}
\resizebox{\columnwidth}{!}{%
\begin{tabular}{@{}lrrrp{0.23\columnwidth}@{}}
\toprule
Setting & Cases & Failure\% & Mean missed & Max missed \\
\midrule
Random multi-parent DAGs & 39,991 & 21.25 & 0.24 & 5 \\
RBAC: backend-sa parents & 1 & 100.0 & 3.00 & 3 \\
RBAC: ci-bot parents & 1 & 100.0 & 3.00 & 3 \\
RBAC: ci-step-deploy parents & 1 & 100.0 & 1.00 & 1 \\
\bottomrule
\end{tabular}}
\end{table}

We treat batch non-compositionality as a correctness corner case rather than a
prevalence claim. In the acyclic subset of the mined LangGraph corpus, 8/33
alternate-parent pairs exhibit it, but its operational significance is clearer
than its frequency: it is exactly the case that arises when alternate parents are
revoked together during scope teardown, parent-key compromise, or policy
rollback. The target language must support batch edge sets natively and compute
$\reach(G)\setminus\reach(G\setminus R)$ directly; otherwise, even a correct
single-edge implementation can under-revoke during incident response.

\subsection{RQ4: Agent-Framework Trace Replay}
\label{sec:real}

The witnesses and synthetic tests could be dismissed as generator artifacts, so
our primary framework-grounded execution test uses executed agent workflows. E16
runs seven fixed LangGraph workflow families plus an exhaustive $3{\times}6$
customer-support matrix, repeating each workflow/input cell 20 times. Each
execution is a compiled \texttt{StateGraph}; we record LangGraph runtime-stream
events and application-level caller-to-callee invocations, then normalize them
into parent-domain-signed delegation decisions. Construct validity is deliberately
narrow: a raw runtime event becomes a delegation decision only when it records a
caller, callee, permission tuple (tenant, resource, action, constraints), and
known deployer labels; example retained fields are caller, callee, tenant,
resource, action, constraints, and deployer. Generic framework stream events are
excluded.

This yields 500 framework traces, 10,500 raw runtime events, and 2,000/2,000
valid signed delegation decisions across 20 deployer domains. Topology claims use
the 25 workflow/input cells as clusters, not the deterministic repeats: 13/25
cells contain a runtime-observed multi-parent agent and 8/25 contain
cross-deployer sharing. Cell-level tree-cascade over-revocation affects a mean
46.9\% of edges (bootstrap 95\% CI $[29.3,64.8]$); deployer-scoped
under-revocation affects 61.6\% ($[45.6,76.7]$).

Signed replay accepts 500/500 target proofs and 500/500 authorized revocations,
including 160/160 cross-deployer cases, and rejects 500/500 unauthorized signers
and 260/260 omission attacks. For 320 selected single-edge withdrawals with an
alternate runtime parent, the edge target preserves the shared agent in all 320
cases while tree cascade revokes it in all 320. The repeated executions test
adapter and replay stability; topology claims are reported at the clustered
workflow/input-cell level.

We also replay E16 against target-language baselines. Over 1,960 single-edge
withdrawals, edge targets and complete-graph credential revalidation are exact in
1,960/1,960 cases, but the latter requires the verifier to hold the complete
graph. Holder/node revocation preserves 0/320 alternate-parent agents; APS-style
tree cascade also preserves 0/320 and is exact for only 35.7\% of edge cases.
Deployer-scoped cascade is exact for 49.0\% and under-revokes all 880 cases whose
semantic target crosses a deployer boundary.

E17--E19 check portability beyond LangGraph at the schema/signature and
target-replay level: A2A, AutoGen, and CrewAI artifacts emit nine traces and
53/53 valid signed delegation events through the same schema. E19 runs five
pinned CrewAI~1.15.1 workflows with deterministic local LLM behavior; all 5/5
\texttt{Crew.kickoff()} runs complete, yielding 35/35 valid task-context
handoffs. Across E17--E19, edge targets preserve 8/8
alternate-parent cases and deployer-scoped cascade misses 19/19 cross-domain
target cases.

We also mine 28 third-party LangGraph applications using recorded GitHub queries,
repository commits, and an exclusion policy that removes tests, examples,
tutorials, notebooks, and generated output. Static extraction yields 47 graphs
(382 nodes, 384 edges), of which 31 contain a multi-parent node. Because 29
extracted control-flow graphs contain cycles, DAG-semantics replay is restricted
to the 18 acyclic graphs (123 nodes, 102 edges): 8/18 contain a multi-parent
node, tree cascade deviates on 54.9\% of edges, and 8/33 alternate-parent pairs
are batch non-compositional. We use the corpus as lower-bound structural
evidence; cyclic graphs require epoch unrolling before target replay.

\begin{table}[t]
\centering
\caption{Agent-native, baseline, portability, and topology evidence. E16 replay
is primary; E17--E19 check schema portability; the mined corpus is static
lower-bound evidence.}
\label{tab:realtraces}
\resizebox{\columnwidth}{!}{%
\begin{tabular}{lllll}
\toprule
Source & Units & MP/AP & X-dep unit & Main outcome \\
\midrule
E16 design & 25 cells, 500 traces & 13 MP & 8 cells & repeats test stability \\
E16 signed replay & 2,000 decisions & 320 AP & 160 cases & 500/500 proofs verified \\
E16 mutations & 760 attacks & -- & -- & 760/760 rejected \\
E16 baselines & 1,960 edges & 320 AP & 880 targets & tree 0/320 AP; scoped 49.0\% exact \\
E17 A2A artifact & 2 traces, 8 decisions & 1 AP & 6 targets & schema/signature valid \\
E18 shaped artifacts & 2 traces, 10 decisions & 2 AP & 3 targets & schema/signature valid \\
E19 CrewAI executable & 5 traces, 35 decisions & 5 AP & 10 targets & kickoff + schema valid \\
OSS LangGraph corpus & 47 static; 18 DAGs & 31 MP; 8 DAG MP & -- & 54.9\% DAG tree deviation \\
\bottomrule
\end{tabular}}
\end{table}

\section{Prototype and Overhead}
\label{sec:prototype}

We implement \method{} in Python with \texttt{cryptography} Ed25519. The
\method{} API and CLI load normalized events, issue edge revocations, export
proof bundles, and verify them without the original graph.

\paragraph{Framework adapter boundary.}
\method{} does not require replacing LangGraph, CrewAI, A2A, or MCP. An
adapter needs four hooks: emit caller-to-callee edges, sign accepted-in-edge
commitments, sign parent revocation envelopes, and verify cached edge sets or
proof bundles. In existing frameworks, the hooks sit in middleware or callbacks
rather than a fork (Table~\ref{tab:integration}).

\begin{table}[t]
\centering
\caption{Adapter boundary. ``No'' means no framework fork or source patch;
wrapper/callback evidence is still emitted.}
\label{tab:integration}
\resizebox{\columnwidth}{!}{%
\begin{tabular}{llll}
\toprule
Target & Hook surface & Framework fork? & Evidence \\
\midrule
LangGraph & stream + app calls & No & 500 traces \\
A2A/AutoGen/CrewAI-shaped & wrappers & No & 4 traces \\
CrewAI executable & callbacks & No & 5 workflows \\
\method{} API/CLI & normalized JSONL & n/a & offline verify \\
\bottomrule
\end{tabular}}
\end{table}

\paragraph{Signed delegation graph.}
Every edge is signed by the parent node's deployer key. A relying party pins
deployer public keys out of band and authenticates each edge locally
(Section~\ref{sec:threat}). Edge establishment is bilateral and is part of the
protocol, not a free property: a delegation edge becomes valid only after the
parent deployer signs the edge descriptor \emph{and} the child deployer updates
and signs its accepted-in-edge commitment to include the new edge. This second
signature is what later makes adjacency completeness checkable; a node that never
accepted an edge will not have committed to it.

\paragraph{Rollback-evident revocation log.}
Revocation events are edge-targeted, signed envelopes appended to a log that
publishes a signed checkpoint containing the entry count, a history hash, and the
previous checkpoint hash. A relying party that has observed checkpoint $c$
rejects a later log with a smaller count, a substituted same-count history, or a
broken checkpoint chain. In the prototype, an honest extended log is accepted and
rollback/substitution is detected.

\paragraph{Proof-carrying target set with completeness.}
Rather than ship the whole graph, an issuer can ship the claimed target $S$ plus
\emph{survivor witnesses}: for each reachable node not in $S$, one
revocation-free root path of signed edges (proving it survives). Disclosing only
the in-edges of target nodes is \emph{not} sound on its own: a malicious issuer
could omit an unrevoked in-edge to make a live node look disconnected. We close
this with \emph{authenticated adjacency completeness}. Because a node accepts
each delegation offered to it, the node's own deployer maintains and signs a
commitment to that node's \emph{complete} accepted in-edge set. The proof
includes this signed commitment for every target node, and the verifier accepts
$S$ iff (i)~every survivor witness authenticates and is revocation-free, (ii)~no
target node appears as a survivor, (iii)~the disclosed cut in-edges of each
target node \emph{exactly match} its signed commitment (no omission), and
(iv)~every unrevoked in-edge of a target node originates from another target
node. Under the same trust assumption as edge authenticity---a deployer does not
sign evidence that contradicts the delegations it accepted---omission is
detectable. In an adversarial test that forges a target by revoking one in-edge
and omitting a surviving alternate parent, the verifier rejects all $40/40$
attempts.

Completeness of the node \emph{set} itself is pinned separately. The
federation-signed snapshot manifest commits to the authorization roster
$B=\reach(G)$ of all root-reachable nodes, and the verifier accepts a target set $T$
with survivor set $W$ only if $T\cap W=\emptyset$ and $T\cup W=B$. A genuine
reachable node therefore cannot be silently omitted from the proof: dropping it
breaks $T\cup W=B$ and the proof is rejected. This is the realization of
Proposition~\ref{prop:localsafety}(iii), and is exercised by the adversarial unit
test that removes a disconnected roster node and confirms rejection.

\paragraph{Overhead.}
On generated multi-parent DAGs ($45\%$ shared-parent ratio; medians over 30 graph
seeds), target computation is linear and takes $0.17$--$2.15$ ms up to 2,000
nodes (Table~\ref{tab:proto}). The dominant cost is Ed25519 verification, not
graph traversal: cold proof verification remains below full-view verification,
and the proof bundle is $64.7$--$69.0\%$ of the full signed bundle, saving
roughly one third of graph disclosure. This control-plane cost is paid when a
revocation bundle or checkpoint is ingested, not on each agent tool call; hot-path
authorization uses cached verified revocation state.

\begin{table}[t]
\centering
\caption{Prototype overhead (median over 30 generated graphs).}
\label{tab:proto}
\resizebox{\columnwidth}{!}{%
\begin{tabular}{@{}rrrrr@{}}
\toprule
Nodes & Target ms & Proof verify ms & Full view ms & Proof/full bytes \\
\midrule
200 & 0.17 & 45.9 & 61.0 & 64.7\% \\
500 & 0.43 & 113.4 & 155.3 & 66.0\% \\
1,000 & 0.91 & 229.7 & 310.8 & 67.0\% \\
2,000 & 2.15 & 463.6 & 644.3 & 69.0\% \\
\bottomrule
\end{tabular}}
\end{table}

The attack matrix rejects 1,100/1,100 malformed envelopes, commitments, Merkle
positions, log histories, and unauthorized revocation signers. E16 adds 760/760
replay-level rejection tests for unauthorized signers and omitted in-edges.

The prototype assumes observed revocation events; once observed, targets are
cheap to compute and check from signed evidence, and log rollback is detectable.

\section{Design Implications}

For AI-agent framework builders, the implication is an API boundary rather than
a new orchestrator: name delegation edges, expose accepted-in-edge commitments,
carry graph evidence, handle batches natively as
$S=\reach(G)\setminus\reach(G\setminus R)$, and keep target correctness separate
from delivery.

\section{Limitations}

The policy is disjunctive authority; all-parent, $k$-of-$n$, or path-scoped
authority need different targets. E16 uses real LangGraph machinery but
controlled workloads, E17--E19 test portability rather than prevalence, and the
static corpus can miss dynamic routing. Production-scale traces and propagation
remain open. A propagation layer should budget stale authority, e.g.,
$v(\mathrm{TTL}+\delta_{\mathsf{prop}})$ operations for rate $v$; this paper
fixes the target such a layer must deliver.

\section{Related Work}

\paragraph{Delegation graphs and revocation semantics.}
Delegated authority has long been represented as graph-structured evidence in
trust management~\cite{blaze1996trust,clarke2001spki,halpern2002spki,li2002rt}.
Revocation work classifies cascades and centralized workflow
policies~\cite{hagstrom2001revocation,wainer2007delegation}. We study signed,
multi-parent agent DAGs checked verifier-independently.

\paragraph{Agent identity and delegated authority.}
Recent agent-identity work treats delegation as a first-class AI-agent concern:
authenticated agent authority~\cite{south2025delegation}, lifecycle management
~\cite{south2025identity}, and verifiable delegation across MCP and A2A
~\cite{prakash2026aip}. These systems authenticate delegation; we define the
authority loss caused by revoking one edge.

\paragraph{Agent protocols and access-control enforcement.}
AutoGen, CrewAI, and LangGraph compose multi-agent workflows
~\cite{wu2023autogen,crewai2024,langgraph2026}; MCP and A2A define protected
tool authorization and cross-agent communication~\cite{mcp2025auth,google2026a2a}.
Runtime supply-chain and protocol analyses identify agentic attack surfaces and
lifecycle risks~\cite{jiang2026agentsurface,anbiaee2026agentprotocols}. Agent
access-control systems govern information flow, actions, or capabilities
~\cite{li2025aac,ji2026seagent,jiang2026chaincaps}, and compositional governance
formalizes delegation and scope~\cite{ibrahim2026governance}. We isolate the
revocation target and evidence for a withdrawn signed edge.

\paragraph{Credentials, capabilities, and tree passports.}
CRLs~\cite{RFC5280}, OCSP~\cite{RFC6960}, CRLite~\cite{crlite2017},
Macaroons~\cite{birgisson2014macaroons}, and SPIFFE/SPIRE~\cite{spiffe2018}
revoke credentials/tokens or bound authority with short lifetimes. APS is closest
among agent proposals but uses a single-parent rooted tree~\cite{pidlisnyi2026aps}.
None directly provides a multi-parent, cross-domain edge target
(Table~\ref{tab:mechanisms}).

\paragraph{Dominators.}
Our target is a root-relative edge-dominance set, computable with standard
dominator algorithms~\cite{lengauer1979dominators}. The
contribution is using that set as the revocation contract for signed agent
delegation.

\section{Conclusion}

AI-agent revocation is an authority-consistency problem in the workflow runtime,
not only a token lifecycle problem. In a federated multi-parent workflow,
revoking one delegation edge need not revoke the child agent, while a local
cascade can miss cross-deployer descendants that still depend on the withdrawn
relationship. This paper makes the runtime contract explicit: under disjunctive
authority, edge revocation invalidates exactly
$\reach(G)\setminus\reach(G\setminus\{e\})$. Used as the reference target, this
contract exposes over-revocation for tree cascades, under-revocation for
deployer-scoped cascades, and non-compositional behavior for batch withdrawals.
The practical implication for agent tools is concrete: \method{} adapters should
emit signed delegation edges, revocation APIs should accept edge sets, and
relying parties should receive enough authenticated graph evidence to check the
authority loss they are asked to enforce.

\noindent\textbf{Artifact Availability.}
The artifact is available at
(\href{https://anonymous.4open.science/r/ictai2026-agent-revocation-artifact-14BA/README.md}{\texttt{artifact-14BA}}).
It includes the \method{} Python API and CLI, E16--E19 traces and scripts,
artifact checks, and reproduction instructions.

\bibliographystyle{IEEEtran}
\bibliography{references}

@techreport{RFC5280,
  author       = {D. Cooper and S. Santesson and S. Farrell and
                  S. Boeyen and R. Housley and W. Polk},
  title        = {{Internet X.509 Public Key Infrastructure Certificate
                   and Certificate Revocation List (CRL) Profile}},
  type         = {RFC},
  number       = {5280},
  institution  = {IETF},
  year         = {2008},
  doi          = {10.17487/RFC5280},
}

@techreport{RFC6960,
  author       = {S. Santesson and M. Myers and R. Ankney and
                  A. Malpani and S. Galperin and C. Adams},
  title        = {{X.509 Internet Public Key Infrastructure Online
                   Certificate Status Protocol -- OCSP}},
  type         = {RFC},
  number       = {6960},
  institution  = {IETF},
  year         = {2013},
  doi          = {10.17487/RFC6960},
}

@inproceedings{crlite2017,
  author    = {James Larisch and David Choffnes and Dave Levin and
               Bruce M. Maggs and Alan Mislove and Christo Wilson},
  title     = {{CRLite}: A Scalable System for Pushing All {TLS}
               Revocations to All Browsers},
  booktitle = {Proceedings of the 2017 IEEE Symposium on Security and
               Privacy (S\&P)},
  year      = {2017},
  pages     = {539--556},
  doi       = {10.1109/SP.2017.17},
}

@inproceedings{birgisson2014macaroons,
  author    = {Arnar Birgisson and Joe Gibbs Politz and {\'U}lfar Erlingsson
               and Ankur Taly and Michael Vrable and Mark Lentczner},
  title     = {Macaroons: Cookies with Contextual Caveats for Decentralized
               Authorization in the Cloud},
  booktitle = {Proceedings of NDSS},
  year      = {2014},
}

@techreport{spiffe2018,
  author      = {CNCF SPIFFE Working Group},
  title       = {{SPIFFE}: Secure Production Identity Framework For Everyone},
  institution = {Cloud Native Computing Foundation},
  year        = {2018},
  note        = {\url{https://spiffe.io/}},
}

@techreport{RFC7009,
  author      = {T. Lodderstedt and S. Dronia and M. Scurtescu},
  title       = {{OAuth 2.0 Token Revocation}},
  type        = {RFC},
  number      = {7009},
  institution = {IETF},
  year        = {2013},
  doi         = {10.17487/RFC7009},
}

@inproceedings{blaze1996trust,
  author    = {Matt Blaze and Joan Feigenbaum and Jack Lacy},
  title     = {Decentralized Trust Management},
  booktitle = {Proceedings of the 1996 IEEE Symposium on Security and Privacy},
  year      = {1996},
  pages     = {164--173},
  doi       = {10.1109/SECPRI.1996.502679},
}

@article{clarke2001spki,
  author  = {Dwaine Clarke and Carl Ellison and Matt Fredette and
             Alexander Morcos and Ronald L. Rivest},
  title   = {Certificate Chain Discovery in {SPKI/SDSI}},
  journal = {Journal of Computer Security},
  volume  = {9},
  number  = {4},
  pages   = {285--322},
  year    = {2001},
}

@misc{halpern2002spki,
  author       = {Joseph Y. Halpern and Ron van der Meyden},
  title        = {A Logical Reconstruction of {SPKI}},
  howpublished = {arXiv:cs/0208028},
  year         = {2002},
}

@inproceedings{li2002rt,
  author    = {Ninghui Li and John C. Mitchell and William H. Winsborough},
  title     = {Design of a Role-Based Trust-Management Framework},
  booktitle = {Proceedings of the 2002 IEEE Symposium on Security and Privacy},
  year      = {2002},
  pages     = {114--130},
  doi       = {10.1109/SECPRI.2002.1004366},
}

@article{wainer2007delegation,
  author  = {Joel Wainer and Akhil Kumar and Paulo Barthelmess},
  title   = {{DW-RBAC}: A formal security model of delegation and
             revocation in workflow systems},
  journal = {Information Systems},
  volume  = {32},
  number  = {3},
  pages   = {365--384},
  year    = {2007},
  doi     = {10.1016/j.is.2005.11.008},
}

@inproceedings{hagstrom2001revocation,
  author    = {{\AA}sa Hagstr{\"o}m and Sushil Jajodia and Francesco
               Parisi-Presicce and Duminda Wijesekera},
  title     = {Revocations --- a classification},
  booktitle = {Proceedings of the 14th IEEE Computer Security Foundations
               Workshop (CSFW)},
  year      = {2001},
  pages     = {44--58},
  doi       = {10.1109/CSFW.2001.930135},
}

@misc{south2025delegation,
  author       = {Tobin South and others},
  title        = {Authenticated Delegation and Authorized {AI} Agents},
  howpublished = {arXiv:2501.09674},
  year         = {2025},
  doi          = {10.48550/arXiv.2501.09674},
}

@misc{south2025identity,
  author       = {Tobin South and others},
  title        = {Identity Management for Agentic {AI}: The New Frontier of
                  Authorization, Authentication, and Security for an {AI} Agent
                  World},
  howpublished = {OpenID Foundation whitepaper; arXiv:2510.25819},
  year         = {2025},
  doi          = {10.48550/arXiv.2510.25819},
}

@misc{prakash2026aip,
  author       = {Sunil Prakash},
  title        = {{AIP}: Agent Identity Protocol for Verifiable Delegation Across
                  {MCP} and {A2A}},
  howpublished = {arXiv:2603.24775},
  year         = {2026},
  doi          = {10.48550/arXiv.2603.24775},
}

@misc{pidlisnyi2026aps,
  author       = {Tymofii Pidlisnyi},
  title        = {Agent Passport System ({APS})},
  howpublished = {\url{https://datatracker.ietf.org/doc/draft-pidlisnyi-aps/}},
  year         = {2026},
  month        = feb,
  note         = {IETF Internet-Draft, work in progress},
}

@misc{google2026a2a,
  author       = {{A2A Project}},
  title        = {Agent2Agent ({A2A}) Protocol Specification},
  howpublished = {\url{https://a2a-protocol.org/latest/specification/}},
  year         = {2025},
}

@misc{mcp2025auth,
  author       = {{Model Context Protocol}},
  title        = {Authorization},
  howpublished = {\url{https://modelcontextprotocol.io/specification/2025-11-25/basic/authorization}},
  year         = {2025},
  note         = {Specification revision 2025-11-25},
}

@misc{anbiaee2026agentprotocols,
  author       = {Zeynab Anbiaee and others},
  title        = {Security Threat Modeling for Emerging {AI}-Agent Protocols:
                  A Comparative Analysis of {MCP}, {A2A}, {Agora}, and {ANP}},
  howpublished = {arXiv:2602.11327},
  year         = {2026},
  doi          = {10.48550/arXiv.2602.11327},
}

@misc{li2025aac,
  author       = {Xinfeng Li and others},
  title        = {A Vision for Access Control in {LLM}-based Agent Systems},
  howpublished = {arXiv:2510.11108},
  year         = {2025},
  doi          = {10.48550/arXiv.2510.11108},
}

@misc{ji2026seagent,
  author       = {Zimo Ji and others},
  title        = {Taming Various Privilege Escalation in {LLM}-Based Agent
                  Systems: A Mandatory Access Control Framework},
  howpublished = {arXiv:2601.11893},
  year         = {2026},
  doi          = {10.48550/arXiv.2601.11893},
}

@misc{ibrahim2026governance,
  author       = {Amjad Ibrahim and Yong Li},
  title        = {Overlaying Governance: A Compositional Authorization Framework
                  for Delegation and Scope in Agentic {AI}},
  howpublished = {arXiv:2606.03518},
  year         = {2026},
  doi          = {10.48550/arXiv.2606.03518},
}

@inproceedings{jiang2026agentsurface,
  author    = {Xiaochong Jiang and Shiqi Yang and Wenting Yang and Yichen Liu
               and Cheng Ji},
  title     = {Agentic {AI} as a Cybersecurity Attack Surface: Threats,
               Exploits, and Defenses in Runtime Supply Chains},
  booktitle = {Proceedings of the 2026 IEEE Conference on Artificial
               Intelligence (CAI)},
  year      = {2026},
  pages     = {2142--2149},
  doi       = {10.1109/CAI68641.2026.11536564},
}

@misc{jiang2026chaincaps,
  author       = {Xiaochong Jiang and others},
  title        = {{ChainCaps}: Composition-Safe Tool-Using Agents via Monotonic
                  Capability Attenuation},
  howpublished = {arXiv:2605.26542},
  year         = {2026},
  doi          = {10.48550/arXiv.2605.26542},
}

@misc{wu2023autogen,
  author       = {Qingyun Wu and others},
  title        = {{AutoGen}: Enabling Next-Gen {LLM} Applications via
                  Multi-Agent Conversation},
  howpublished = {arXiv:2308.08155; Oral, ICLR 2024 Workshop on LLM Agents},
  year         = {2023},
}

@misc{langgraph2026,
  author       = {{LangChain}},
  title        = {{LangGraph}: Graph {API}},
  howpublished = {\url{https://docs.langchain.com/oss/python/langgraph/graph-api}},
  year         = {2026},
}

@misc{crewai2024,
  author       = {{CrewAI, Inc.}},
  title        = {{CrewAI}: Framework for Orchestrating Role-Playing,
                  Autonomous {AI} Agents},
  howpublished = {\url{https://crewai.com}},
  year         = {2024},
}

@article{lengauer1979dominators,
  author  = {Thomas Lengauer and Robert Endre Tarjan},
  title   = {A Fast Algorithm for Finding Dominators in a Flowgraph},
  journal = {ACM Transactions on Programming Languages and Systems},
  volume  = {1},
  number  = {1},
  pages   = {121--141},
  year    = {1979},
  doi     = {10.1145/357062.357071},
}

@misc{k8srbac2024,
  author       = {{The Kubernetes Authors}},
  title        = {Using {RBAC} Authorization},
  howpublished = {Kubernetes Documentation},
  year         = {2024},
}

@article{barabasi1999emergence,
  author  = {Albert-L\'{a}szl\'{o} Barab\'{a}si and R\'{e}ka Albert},
  title   = {Emergence of Scaling in Random Networks},
  journal = {Science},
  volume  = {286},
  number  = {5439},
  pages   = {509--512},
  year    = {1999},
  doi     = {10.1126/science.286.5439.509},
}

\end{document}